
\documentclass[letterpaper, 10 pt, conference]{ieeeconf}  

\IEEEoverridecommandlockouts                              

\overrideIEEEmargins                                      




\usepackage{cite}
\usepackage{amsmath,amssymb,amsfonts}
\usepackage{algorithmic}
\usepackage{graphicx}
\usepackage{textcomp}
\usepackage{verbatim}
\usepackage[linesnumbered,ruled,vlined]{algorithm2e}
\usepackage{xcolor}
\usepackage{subfig}


\newtheorem{theorem}{\bf Theorem}[section]
\newtheorem{proposition}{Proposition}[section]

\DeclareMathOperator{\E}{\mathbb{E}}
\DeclareMathOperator*{\argmax}{arg\,max}
\DeclareMathOperator*{\argmin}{arg\,min}

\title{\LARGE \bf
Reinforcement Learning for Vision-based Object Manipulation with Non-parametric Policy and Action Primitives
}

\author{Dongwon Son$^*$, Myungsin Kim, Jaecheol Sim, and Wonsik Shin 
\thanks{
The authors are with Simulation Lab, Samsung Research, Samsung Electronics, Seoul, Republic of Korea\newline
$^*$: Corresponding author, {\tt\small dongwon.son@samsung.com}
}%
}

\begin{document}

\maketitle
\thispagestyle{empty}
\pagestyle{empty}

\begin{abstract}
The object manipulation is a crucial ability for a service robot, but it is hard to solve with reinforcement learning due to some reasons such as sample efficiency.
In this paper, to tackle this object manipulation, we propose a novel framework, AP-NPQL (Non-Parametric Q Learning with Action Primitives), 
that can efficiently solve the object manipulation with visual input and sparse reward,
by utilizing a non-parametric policy for reinforcement learning and appropriate behavior prior for the object manipulation.
We evaluate the efficiency and the performance of the proposed AP-NPQL
for four object manipulation tasks on simulation (pushing plate, stacking box, flipping cup, and picking and placing plate),
and it turns out that our AP-NPQL outperforms the state-of-the-art algorithms based on parametric policy and behavior prior in terms of learning time and task success rate.
We also successfully transfer and validate the learned policy of the plate pick-and-place task
to the real robot in a sim-to-real manner.

\end{abstract}
\section{INTRODUCTION}


Object manipulation is essential for robots who are working for e.g., household chores such as setting and clearing the table, tidying up toys, etc.
However, achieving this object manipulation, especially from visual input, is difficult since it requires robots to determine which object to grasp, where to grasp, when to grasp and release from the partially occluded visual input.
One promising direction of solving this object manipulation is reinforcement learning (RL), which has shown promise of success for various problems 
such as games \cite{silver2016mastering, mnih2015human}, 
gait control of quadruped\cite{hwangbo2019learning}, and dexterous in-hand manipulation \cite{akkaya2019solving}.
Despite recent advance in RL, it still suffers from both low sample efficiency and performance drop caused by local optima. It even becomes more difficult for the object manipulation task, due to
high dimensional visual sensory input, 
continuous action space, and
long-horizon, sparse reward characteristic.
To overcome those challenges, 
there have been many researches such as, 
decomposing vision module \cite{vecerik2019practical}, 
defining sub goals \cite{riedmiller2018learning}, 
engineering reward \cite{wulfmeier2020data},
formulating as regularized Markov Decision Process (MDP) \cite{abdolmaleki2018maximum, wu2019behavior, nair2020accelerating, siegel2020keep, tirumala2020behavior}, and etc.

Among these approaches, 
the recent progress of the regularized MDP formulation, 
which makes policy keep close to the desirable distribution during training process,
shows to accelerate the process and improve final performance even on the object manipulation task
without reducing the degree-of-freedom (DOF) of action \cite{kalashnikov2018qt, akinola2020learning}.
However, the previous works utilizing this regularized MDP adopt the policy iteration scheme with a parametric policy,
where the parameters of the policy are updated by projection of optimal non-parametric policy \cite{ghosh2020operator},
and we found that it degrades the performance of the solution.

\begin{figure}[!t]
	\centering
	\includegraphics[width=.95\columnwidth]{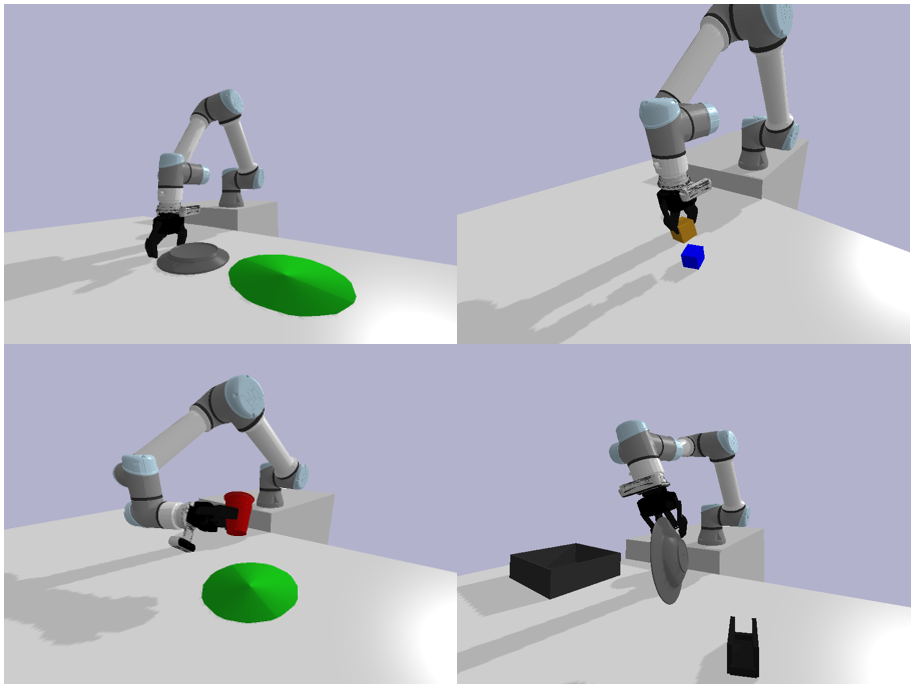}
	\caption{Four object manipulation tasks of pushing plate (top left), stacking box (top right), flipping cup (bottom left), and picking and placing plate (bottom right), where the algorithms are evaluated}\vspace{-3mm}
	\label{fig:envs}
\end{figure}

Therefore, in this paper, to solve the object manipulation with visual input and sparse reward, 
we develop a methods for solving the regularized MDP without the parametric projection, which improves the efficiency and performance compared to the previous approaches.
It is realized by developing a method to directly use optimal non-parametric policy. Through this method, we can unleash the restriction of expressiveness in parametric policy and reduce performance loss arising from the projection step. 
Additionally, we propose a prior distribution that can reflect the prior knowledge of the object manipulation problem. 
Through the capture of the basic structure of the task, this distribution regularizes action space efficiently. Then it  accelerates the training process and improves the final performance.
The proposed framework is evaluated with four object manipulation tasks: pushing plate, stacking box, flipping cup, and picking and placing plate (see Fig. \ref{fig:envs}). The results show that our approach outperforms the state-of-the-art algorithms which have a policy projection step in terms of the sample efficiency and final performance. Furthermore, we present the preliminary sim-to-real result of one task, plate pick-and-place, with a real robot.
Our main contributions are summarized as follows.
\begin{itemize}
    \item Develop RL algorithms to solve the regularized MDP problem with the non-parametric policy
    \item Reveal novel prior distribution which can capture the basic structure of the object manipulation problems
\end{itemize}

The regularized MDP with prior distribution has been widely utilized to encourage exploration \cite{haarnoja2017reinforcement, haarnoja2018soft},
keep trust region \cite{abdolmaleki2018maximum}, and
reduce action space \cite{siegel2020keep, tirumala2020behavior}.
Especially in \cite{siegel2020keep, tirumala2020behavior}, they also utilize the behavior prior for the regularization.
In \cite{siegel2020keep}, 
they train parametric behavior prior from the offline dataset,
and a policy is updated with a MPO-style projection \cite{abdolmaleki2018maximum}.
In \cite{tirumala2020behavior},
they use regularizer to keep basic skills like gait motion,
and a policy is improved with the stochastic value gradients (SVG)\cite{heess2015learning} with the regularization term.
However, both use the parametric policy with a policy iteration scheme, which can restrict expressiveness while making the policy converge to the local minima.

On the other hand, the non-parametric policy is also applied in \cite{kalashnikov2018qt, wang2020critic}. 
In \cite{kalashnikov2018qt}, the cross-entropy method (CEM)\cite{de2005tutorial}
which is a sampling-based optimization method, is used to calculate a deterministic policy from evaluated $Q$ function. However, this approach cannot express the stochastic Boltzmann policy, and it takes a relatively long time to inference because of the iterative CEM process.
In the work of \cite{wang2020critic}, they also use a non-parametric policy with sampling, but it is based on the parametric policy which is updated with parametric projection similar to MPO.

The rest of the paper is organized as follows.
Sec. \ref{sec:prelim} explains preliminary of the problem description and conventional methods to solve it.
In Sec. \ref{sec:Q-learning with Non-Parametric Policy}
and Sec. \ref{sec:Object Manipulation with NPQL},
we propose a novel algorithm which can incorporate the non-parametric policy and a regularizer to improve efficiency and performance of RL with object manipulation.
Experimental details and results for performance comparison and sim-to-real transfer are presents in Sec. \ref{sec:Experiments}.
Finally, Sec. \ref{sec:conclusion} highlights our conclusion and future works.

\section{Preliminary}
\label{sec:prelim}

\subsection{Regularized MDP with Behavior Prior}
\label{sec:regularized_MDP}
Standard infinite horizon MDP is defined in $(\mathcal{S}, \mathcal{A}, r, p_{sa}, \gamma)$ where $\mathcal{S}$ is set of states, $\mathcal{A}$ is set of feasible actions, $r : \mathcal{S} \times \mathcal{A} \rightarrow \left[ r_{MIN}, r_{MAX} \right]$ is reward, $p_{sa} : \mathcal{S} \times \mathcal{S} \times \mathcal{A} \rightarrow \left[ 0 , \infty \right)$ is transition probability conditioned on state $s$ and action $a$, $\gamma \in [0,1)$ is discount factor. The objective of infinite horizon MDP is to maximize cumulative reward $\mathop{\E}_{\tau}{\left[\sum_{t\ge0}\gamma^tr_t\right]}$ where $\tau$ is trajectory. When introducing regularization term $D : \mathcal{S} \rightarrow \mathbb{R}$ and regularizer $b$, the objective is changed to $\mathop{\E}_{\tau}{\left[\sum_{t\ge0}\gamma^t\left(r_t - D\left( \cdot || b \right)\right) \right]}$, and we call MDP with this objective as regularized MDP. This additional regularization can be realized with divergence such as KL divergence, negative entropy, etc.\cite{wu2019behavior}. 

In this works, we want to solve object manipulation by utilizing the prior knowledge as the prior distribution, so we adopt regularized MDP which can incorporate it as regularizer. Specifically, we modify it to parameterize $b$ as behavior prior $b_\theta$, and to have the hard constraint of regularization.
After introducing replay buffer $\mathcal{B}$, the problem becomes,
\begin{equation}
\label{eq:J_pi_theta}
\begin{split}
\max_{\pi\in\Pi_{\theta}, \theta} J(\pi, \theta) = \max_{\pi, \theta}\; & \mathop{\mathbb{E}}_{\substack{s_0\sim\mathcal{B}, \, a_t\sim\pi \\ s_{t+1}\sim p_{s_ta_t}}} \left[ \sum^{\infty}_{t\ge0}{\gamma^t r(s_t,a_t)} \right] \\ 
\textrm{s.t.} \; & D_{KL}\left( \pi_s || b_{\theta,s} \right) \le \epsilon \ \forall s\in\mathcal{B},
\end{split}
\end{equation}
where $\pi_s$ is policy distribution conditioned on $s$, $\Pi_{\theta}=\{\pi_s|D_{KL}\left(\pi_s||b_{\theta,s} \right) \le \epsilon \}$, and $D_{KL}$ is KL divergence. Note that the parameterization of $b$ makes it flexible, allowing it to adapt to progressively evolving $\pi$.

This objective is also recovered from inference formulation as \cite{abdolmaleki2018maximum, levine2018reinforcement, tirumala2020behavior}, and can be solved by joint optimization as expectation-maximization (EM) algorithm. 
\begin{itemize}
    \item \textbf{E-step}: optimize $\pi$ given $\theta_k$ \vspace{-1mm}
    \begin{equation}
    \label{eq:estep_optimization}
        q^k = \argmax_{\pi\in\Pi_{\theta_k}}J\left( \pi, \theta_k \right) \vspace{-1mm}
    \end{equation}
    \item \textbf{M-step}: update $\theta$ given $q^k$ to satisfy \vspace{-1mm}
    \begin{equation}
    \label{eq:mstep_update}
        D_{KL}\left( q^k || b_{\theta_{k+1}} \right) \le D_{KL}\left( q^k || b_{\theta_{k}} \right) \ \text{for} \ \forall s\in \mathcal{B} \vspace{-1mm}
    \end{equation}
\end{itemize}
Here, the update from M-step results in  \vspace{-1mm}
\begin{equation*}
    D_{KL}\left( q^k || b_{\theta_{k+1}} \right) \le \epsilon  \ \text{for} \ \forall s\in \mathcal{B} \vspace{-1mm}
\end{equation*}
with which we can conclude \vspace{-1mm}
\begin{equation*} \vspace{-1mm}
\begin{split}
    J(q^{k+1}, &\theta_{k+1})\\
    &= \max_{\pi\in\Pi_{\theta_{k+1}}}{J(\pi, \theta_{k+1})} \ge J(q^{k}, \theta_{k+1}) = J(q^{k}, \theta_{k})
\end{split}
\end{equation*}
because both $q^k$ and $q^{k+1}$ are in the feasible region of the problem $max_{\pi\in\Pi_{\theta_{k+1}}}{J(\pi, \theta_{k+1})}$.
Therefore the algorithm guarantees monotonic improvement with respect to $J$, and convergence to local optima with bounded reward assumption.
Now let us explain the method to solve each E-step and M-step. The M-step \eqref{eq:mstep_update} is easy to implement because it can be achieved by the supervised learning for a given target distribution $q^k$, with expectation of KL divergence loss over $\mathcal{B}$.
On the other hand, the optimization problem in the E-step \eqref{eq:estep_optimization} can be solved by a method to solve the regularized MDP with fixed prior distribution. Next, prior works to solve E-step is presented.

\begin{table*}[!t]
\renewcommand{\arraystretch}{1.3}
\centering
 \begin{tabular}{|c  c c c c c|} 
 \hline
 \multicolumn{6}{|c|}{\textbf{objective} :
 $ \max_{\pi}{\mathop{\E}_{\pi}\left[ Q\right] - \alpha D_{KL} \left( \pi || b \right) } \quad \text{or} \quad \max_{\pi} \mathop{\E}_{\pi}\left[ Q \right] \ \text{s.t.} \ D_{KL}(\pi||b) < \epsilon $}\\
 \multicolumn{6}{|c|}{\textbf{non-parametric policy} : $q \sim b\exp{Q \over \alpha}$} \\
 \hline\hline
 algorithm & regularizer$(b)$ & policy$(\pi)$ & regularizer update &  policy update & policy evaluation\\
 \hline\hline
 AP-NPQL (ours) & $b_{\theta}$ & q & $\min_{\theta}D_{KL}\left( q || b_{\theta} \right)$ & none & distributional \cite{bellemare2017distributional} \\ 
 \hline
 AP-MPO (sec. \ref{sec:comparison_experiment}) & $b_{\theta}$ & $\pi_{\psi}$ & $\min_{\theta}D_{KL}\left( q || b_{\theta} \right)$ & $\min_{\psi}D_{KL}\left( q || \pi_{\psi} \right)$ & distributional \\ 
 \hline
 AP-SAC (sec. \ref{sec:comparison_experiment}) & $b_{\theta}$ & $\pi_{\psi}$ & $\min_{\theta}D_{KL}\left( \pi_{\psi} || b_{\theta} \right)$ & $\min_{\psi}D_{KL}\left( \pi_{\psi} || q \right)$ & distributional \\ 
 \hline
 ABM-MPO\cite{siegel2020keep} & $b_{\theta}$ & $\pi_{\psi}$ & $\min_{\theta} D_{KL}\left( \pi_{prior} || b_{\theta} \right)$ with ABM & $\min_{\psi}D_{KL}\left( q || \pi_{\psi} \right)$  & TD(0) \\
 \hline
 Alg.1 in \cite{tirumala2020behavior} & $b_{\theta}$ & $\pi_{\psi}$ & $\min_{\theta}D_{KL}\left( \pi_{\psi} || b_{\theta} \right)$ & $\min_{\psi}D_{KL}\left( \pi_{\psi} || q \right) $  & Retrace \cite{munos2016safe} \\
 \hline
 MPO\cite{abdolmaleki2018maximum} & $\pi_{k-1}$ & $\pi_{\psi}$ & $b = \pi_{k-1}$ & $\min_{\psi}D_{KL}\left( q || \pi_{\psi} \right)$ & Retrace  \\
 \hline
 SAC\cite{haarnoja2018soft} & uniform & $\pi_{\psi}$ & none & $\min_{\psi}D_{KL}\left( \pi_{\psi} || q \right) $ & clipped double Q \cite{fujimoto2018addressing}\\
 \hline
\end{tabular}
\caption{Summary of the comparison between algorithms to solve regularized MDP}\vspace{-3mm}
\label{table:comparison_alg}
\end{table*}

\subsection{Parametric Policy Improvement with Regularizer}
\label{sec:parametric_policy}
Now, let us introduce how to solve the regularized MDP or hard constraint version of it \eqref{eq:estep_optimization}.
Most of the algorithms to solve it are based on policy iteration, which approximate Q function and policy as a neural network and progress through the iterative procedure of the policy evaluation and the policy improvement \cite{abdolmaleki2018maximum, wu2019behavior, tirumala2020behavior, siegel2020keep, nair2020accelerating}. The policy improvement methods of the prior works are mainly divided into two categories: MPO\cite{abdolmaleki2018maximum} and SAC\cite{haarnoja2018soft}. 
Both improvement can be viewed as projection of non-parametric policy to parametric policy by minimizing divergence. Given Q function from policy evaluation step, the optimal non-parametric policy can be obtained as $q\propto b \exp{(Q/\alpha)}$, then the objective to project it into parametric policy $\pi_{\psi}$ is $\min_{\psi} D_{KL}\left(q || \pi_{\psi} \right)$, while in SAC, $\min_{\psi}{ D_{KL}\left( \pi_{\psi} || q \right)}$ is used.
This objective is resembled to $\max_{\psi}{ \mathop{\E}_{a\sim\pi_{\psi}}\left[ Q\left(s, a \right) \right] -  \alpha D_{KL}\left( \pi_{\psi} || b\right)}$
where expectation over $\pi_{\psi}$ is calculated by reparameterization.
This is same with SVG\cite{heess2015learning} with additional regularization term, which is adopted in \cite{tirumala2020behavior}. 

The major concern of this paper is that, the parametric projection can degrade performance from non-parametric policy $q$. Thus, we  develop a method to directly use the $q$ for action sampling without any projection, to improve the performance. The summarized differences of ours from others are presented in table \ref{table:comparison_alg}.
Details for how to incorporate the non-parametric policy for the Q-learning framework are explained as follows.

\section{Q-learning with Non-Parametric Policy}
\label{sec:Q-learning with Non-Parametric Policy}

If the policy projection step is removed, the method to solve E-step naturally changes to Q-learning rather than policy iteration. So, in this section, we first introduce Q-learning framework to solve E-step, then show practical way to implement it.

\subsection{Q-learning with Behavior Prior}
Instead of adopting policy iteration by introducing parametric policy as presented in Sec. \ref{sec:parametric_policy},
we solve regularized MDP with Q-learning by introducing regularized Bellman operator.
\begin{theorem}
Define regularized Bellman operator as
\begin{equation}
\begin{split}
\label{eq:operator_T}
    \mathcal{T}Q\left(s,a \right) = r(s,a)+\gamma &\max_{\pi\in\Pi_{\theta_k}}{\mathop{\E}_{s'\sim p_{sa}, a'\sim\pi}\left[Q(s',a')\right]}
\end{split}
\end{equation}
and optimal Q value as 
\begin{equation}
\begin{split}
Q^{\theta_k}(s,a) = &\max_{\pi\in\Pi_{\theta_k}}{\mathop{\E}_{\substack{s_0=s, \ a_0=a\\\tau\sim\pi}}\left[r_t\right]}
\end{split}
\end{equation}
then, $\mathcal{T}^{\infty}Q \rightarrow Q^{\theta_k}$ with converged policy
\begin{equation}
    \label{eq:policy}
    q^k = \frac{b_{\theta_k} \exp{ \left( Q^{\theta_k} / \alpha \right)}} {\mathop{\E}_{a \sim b_{\theta_k}} \left[ \exp{ \left( Q^{\theta_k} / \alpha \right)} \right]}
\end{equation}
where dual variable $\alpha$ of
\begin{equation}
\label{eq:alpha_dual}
 \alpha(s)=\argmin_{\alpha'}\alpha' \epsilon + \alpha'\log{\mathop{\E}_{a\sim b_{\theta_k}}\left[ \exp{\frac{Q^{\theta_k}(s,a)}{\alpha'}}\right] }.
\end{equation}
Furthermore, $q^k$ is optimal solution of $max_{\pi\in\Pi_{\theta_k}}J\left( \pi, \theta_k \right)$.
\end{theorem}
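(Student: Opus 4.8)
The plan is to decompose the statement into two classical ingredients: a contraction-mapping argument that yields convergence of $\mathcal{T}^{\infty}Q$ to a unique fixed point, and a convex-duality argument that identifies the greedy step embedded in $\mathcal{T}$ with the Boltzmann form \eqref{eq:policy} together with the dual \eqref{eq:alpha_dual}. The structural fact that makes everything go through is that the constraint $D_{KL}(\pi_s\,||\,b_{\theta_k,s})\le\epsilon$ defining $\Pi_{\theta_k}$ is imposed \emph{per state}, so both the inner maximization in \eqref{eq:operator_T} and the global E-step objective \eqref{eq:estep_optimization} decompose state-by-state, and Bellman's principle of optimality applies.

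First I would establish that $\mathcal{T}$ is a $\gamma$-contraction in the sup norm. For any two bounded functions $Q_1,Q_2$ and any fixed $(s,a)$ the reward term cancels, and
\begin{equation*}
\left| \mathcal{T}Q_1(s,a) - \mathcal{T}Q_2(s,a) \right| = \gamma \left| \max_{\pi\in\Pi_{\theta_k}} \mathbb{E}_{s',a'\sim\pi}[Q_1] - \max_{\pi\in\Pi_{\theta_k}} \mathbb{E}_{s',a'\sim\pi}[Q_2] \right| \le \gamma \|Q_1-Q_2\|_\infty ,
\end{equation*}
using that a maximum over a common-index family and an expectation are both non-expansive. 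Since $r\in[r_{MIN},r_{MAX}]$ and $\gamma<1$, the iterates stay bounded and live in a complete space, so the Banach fixed-point theorem gives a unique fixed point $Q^{*}$ with $\mathcal{T}^{\infty}Q\to Q^{*}$ for any initialization.

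Next I would solve the inner problem for a fixed $s'$ and fixed $Q$, namely $\max_\pi \mathbb{E}_{a'\sim\pi}[Q(s',a')]$ subject to $D_{KL}(\pi\,||\,b_{\theta_k})\le\epsilon$ and $\int\pi\,da'=1$. The objective is linear in $\pi$ and the KL constraint is convex, with $\pi=b_{\theta_k}$ strictly feasible whenever $\epsilon>0$, so Slater's condition delivers strong duality. Forming the Lagrangian with multipliers $\alpha\ge0$ (KL) and $\lambda$ (normalization) and setting the functional derivative in $\pi(a')$ to zero yields $\pi(a')\propto b_{\theta_k}(a')\exp(Q(s',a')/\alpha)$; normalizing reproduces exactly \eqref{eq:policy}. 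Substituting this minimizer back collapses the Lagrangian to $\alpha\epsilon + \alpha\log \mathbb{E}_{a\sim b_{\theta_k}}[\exp(Q/\alpha)]$, a convex function of $\alpha$ whose minimizer is \eqref{eq:alpha_dual}, fixing $\alpha(s')$. Hence the greedy map inside $\mathcal{T}$ returns precisely the non-parametric Boltzmann policy.

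Finally I would identify $Q^{*}$ with $Q^{\theta_k}$ and conclude optimality for the E-step. Evaluating the fixed point under its own greedy policy $q^k$ shows $q^k$ attains value $Q^{*}$, while for any other feasible stationary $\pi\in\Pi_{\theta_k}$ the per-state inequality $\mathbb{E}_{a'\sim\pi}[Q^{*}]\le \max_{\pi'\in\Pi_{\theta_k}}\mathbb{E}_{a'\sim\pi'}[Q^{*}]$ propagated through the discounted recursion gives value $\le Q^{*}$; therefore $Q^{*}=Q^{\theta_k}$ and $q^k$ solves \eqref{eq:estep_optimization}. I expect the main obstacle to be this last identification: one must justify that the per-state greedy policy composes into a \emph{globally} optimal stationary policy over the infinite horizon. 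This is exactly where the time-independent, state-factorized nature of $\Pi_{\theta_k}$ is essential, since it lets the standard value-iteration optimality argument for unconstrained MDPs carry over; care is also needed to confirm that the maximum in \eqref{eq:operator_T} is attained (compactness of the KL ball) and that a stationary optimal policy exists.
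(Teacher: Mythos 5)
Your proposal is correct and takes essentially the same route as the paper: a sup-norm $\gamma$-contraction argument for $\mathcal{T}$ (reward cancellation plus non-expansiveness of the constrained maximum and the expectations, exactly as in the paper's appendix) followed by the Banach fixed-point theorem, together with the Lagrangian/KKT treatment of the per-state KL-constrained maximization yielding the Boltzmann policy and the dual objective $\alpha\epsilon + \alpha\log\mathop{\E}_{a\sim b_{\theta_k}}\bigl[\exp\left(Q/\alpha\right)\bigr]$, which the paper does not rederive but cites from MPO \cite{abdolmaleki2018maximum}. The only difference is granularity: you make explicit the Slater condition, the attainment of the inner maximum, and the sandwich argument identifying the fixed point with $Q^{\theta_k}$ and $q^k$ with the E-step optimum, steps the paper compresses into an appeal to Bellman's principle of optimality.
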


Note that, we treat $\alpha$ as a function of $\mathcal{S} \rightarrow \mathbb{R}$.
With this theorem and stochastic approximation of transition probability, we can solve the optimization problem of \eqref{eq:estep_optimization} by simulated data $(s,a,s')$ as Q-learning \cite{bertsekas1996neuro}. This procedure is differ from methods in \ref{sec:parametric_policy}, in terms of approximated model.
In Q-learning, only the Q function is approximated by a neural network, and the policy is obtained by solving the optimization problem for Q as \eqref{eq:policy}, whereas two neural networks (i.e. policy and Q function) are needed for policy iteration.
Finally, original optimization problem of \eqref{eq:J_pi_theta} can be solved by joint procedure with the E-step of Q-learning and M-step of supervised learning. 

However, to implement this with function approximation such as deep neural network, 
one problem should be resolved, i.e., sampling actions from the non-parametric policy distribution $q^k$.
This policy $q^k$ should be used for the rollout to collect data and the expectation of the Bellman operator \eqref{eq:operator_T}.
As in Sec. \ref{sec:nonparametric_policy}, we resolve this by batch sampling.

\begin{figure*}[!t]
	\centering
    \includegraphics[width=.95\textwidth]{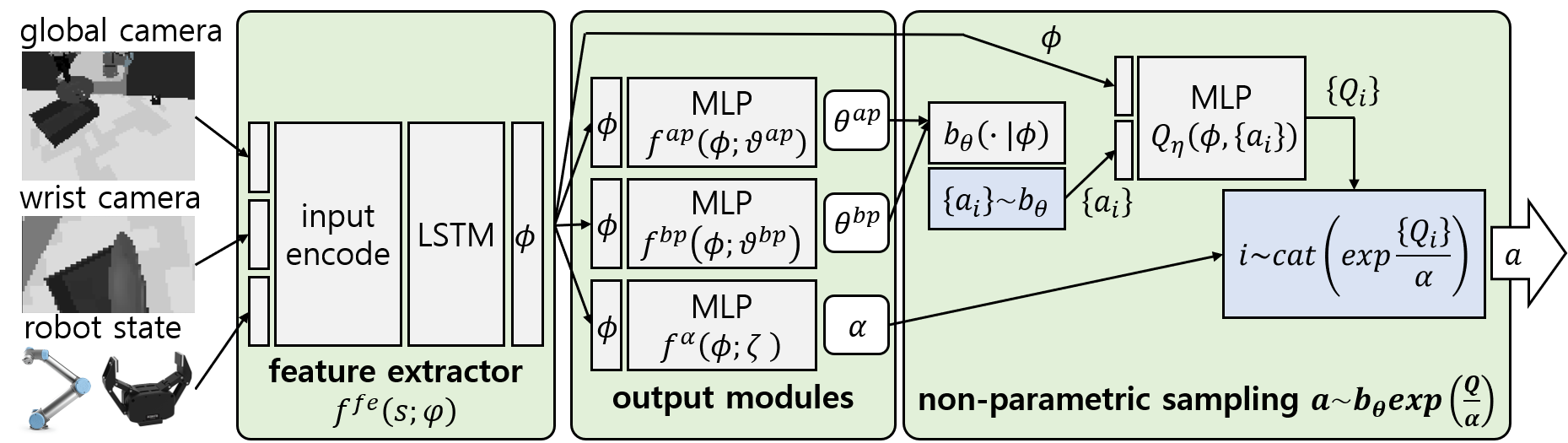}
	\caption{Illustration of network structure used in the experiment and the flow of non-parametric policy. $cat(p_i)$ means categorical distribution with probability proportional $p_i$, and $\{x\}$ means batch data over $x$. The network takes observation of pixel from global and wrist camera, and robot state, while predicting components of non-parametric policy and Q value. The computational efficiency is improve by dividing the network into feature extractor and output modules. The non-parametric policy can express Boltzmann policy over optimal distribution of $a\sim b_{\theta}\exp(Q/\alpha)$ by introducing important sampling from batch data.}\vspace{-2mm}
	\label{fig:network_Structure}
\end{figure*}

\subsection{Non-parametric Policy}
\label{sec:nonparametric_policy}
In continuous setting, sampling from non-parametric distribution have been regarded as intractable\cite{haarnoja2017reinforcement}. Therefore, many algorithms introduced a parametric actor $\pi_{\psi}$ to approximate non-parametric policy with parametric projection as explained in Sec. \ref{sec:parametric_policy}.
Rather than introducing the parametric actor, here we directly use the optimal non-parametric policy $q^k$ of \eqref{eq:policy}, for the collection and the expectation, by utilizing batch sampling.
The approximation of sampling from $q^k$ used in this paper consists of two steps: 1) The batch of action candidates are sampled from parametric behavior prior distribution $b_{\theta_k}$. 2) After evaluating each action candidate with $Q^{\theta_k}$, the action is sampled proportional to $\exp{Q^{\theta_k}\over\alpha}$. This process can be understood as self-normalized important sampling. Through this procedure, Boltzmann policy can be implemented. The illustrative figure is presented in Fig. \ref{fig:network_Structure}.

Expectation over policy $q^k$ is also needed to calculate target value or KL divergence. This can be achieved by using the important sampling:\vspace{-1mm}
\begin{equation*}\vspace{-1mm}
\mathop{\E}_{a\sim q^k}\left[ f(a) \right] 
\approx \sum_{a_i\sim b_{\theta_k}} w_i f(a_i), \ w_i \propto {\exp {Q_i^{\theta_k} \over \alpha}}
\end{equation*}
where $Q_i^{\theta_k} = Q^{\theta_k}\left(s, a_i \right)$ and $w_i$ is calculated with self-normalization within the batch of action samples from $b_{\theta_k}$.
We call Q-learning with this sampling-based non-parametric policy as non-parametric Q-learning (NPQL). 
The brief comparison of this NPQL with other algorithms are presented in table \ref{table:comparison_alg}.
Note that a parametric policy can be introduced by simply inserting additional policy improvement step as described in Sec. \ref{sec:parametric_policy}. 
We implement this approach too, and present comparative experimental results in Sec. \ref{sec:Experiments}.
See Sec. \ref{sec:comparison_experiment} for more details about parametric version.

\subsection{Distributional Q with Batch Action Samples}
\label{sec:Distributional Q with Action Samples}
In the NPQL, we update Q value function by minimizing Bellman residual. Motivated by recent works showing effectiveness of distributional Q to continuous setting \cite{barth2018distributed, vecerik2019practical, ma2020dsac}, we express Q value as categorical distribution with 51 bins of fixed interval \cite{bellemare2017distributional}.
One drawback of the categorical distribution is the truncated Q value, but it can be ignored in our case because the problem we want to solve has a sparse reward with a fixed value, so the range of valid Q value is limited. 

To apply distributional Q to NPQL, the algorithm presented in \cite{bellemare2017distributional} needs to be modified to utilize batch action samples.
Let us start from the minimizing temporal difference (TD) error in terms of KL divergence with incorporation of distribution over both Q value and action,
\begin{equation*}
    \min_{\eta}{\mathop{\E}_{\substack{(s,a,s') \sim \mathcal{B} \\ a'\sim\pi }}\left[D_{KL}\left(\bar{Z}(s,a,s',a') || Z_{\eta}(s,a) \right)\right]}
\end{equation*}
where $Z_\eta$ is distribution over Q value parameterized by $\eta$, $\bar{Z} = r(s,a)+\gamma Z_{\eta'}(s',a')$ is the target value distribution with target parameter $\eta'$.
After introducing categorical representation of $Z$ and batch sampling of action, the objective becomes,
\begin{equation}
    \label{eq:Z_loss}
    \sum_{\mathcal{B}}\sum_{a'_i\sim\pi}{\sum_j{-\bar{z}^i_j \log{z_j}}}
    = \sum_{\mathcal{B}}\sum_j{-\mathop{\E}_{a\sim\pi}{\left[\bar{z}_j\right]} \log{z_j}}
\end{equation}
where $z_j$ is $j$'th bin of $Z_{\eta}(s,a)$ and $\bar{z}^i_j$ is $j$'th bin of $\bar{Z}(s,a,s',a'_i)$.
The expectation over $\pi$ can be calculated with sampling which is explained in Sec. \ref{sec:nonparametric_policy}.
In the \eqref{eq:Z_loss}, calculation results of both sides are the same, but the computation of the right side is more efficient because of the pre-calculation of expectation over action samples.
Practically, we use a large size of action samples, and it substantially improves computational efficiency. 
Note that $Q$ values are recovered by expectation over $Z$, $Q_{\eta}=\mathop{\E}{\left[Z_{\eta}\right]}$. For our experiment, we use N-step backup and this can be simply implemented by changing one-step target $\bar{Z}$ to the N-step target.

\section{Object Manipulation with NPQL}
\label{sec:Object Manipulation with NPQL}
So far, we have explained Q-learning-based algorithm with the non-parametric policy, NPQL. However, to apply NPQL, parametric behavior prior distribution (i.e., $b_\theta$ in \eqref{eq:mstep_update}) is required, from which we can extract action samples. Conditions for proper $b_\theta$ are 1) easy to sample from, 2) supporting the region where Q value is high, 3) not too concentrated for exploration.
Previous works have proposed parametric prior from Variational Auto Encoder (VAE) \cite{fujimoto2019off} or regressing with offline dataset \cite {siegel2020keep}. 
However, these methods do not consider the specific structure of the object manipulation.
In this section, we propose proper prior distribution to reflect this basic structure and complete algorithms by composing it with NPQL.

\subsection{Behavior Prior with Action Primitives}

Our intuition is that, a proper behavior prior can improve the efficiency of RL significantly,
since it can effectively regularize the search space of action.
Therefore, our concern is what we take as the behavior prior for the object manipulation.
Fortunately, we found that the object manipulation task can be executed with a sequence of some basic operations such as moving toward an object and moving toward a goal position while grasping or releasing the gripper.
So, we define the set of those actions as \textit{action primitives} (APs). 
For example, for the plate pick-and-place task in Fig. \ref{fig:AP_figure},
the APs can be: moving toward a plate while releasing the gripper, moving toward a goal while closing the gripper. Specifically, we use joint velocity as input, APs are clipped vector from the current joint state to the target state which can be obtained from inverse kinematics solver.
In the test time, it is hard to access the object pose directly, and we use prediction model of APs $f^{ap}$, which is trained with behavior cloning during training step. We can get reasonably accurate prediction model even with the high dimensional pixel input, as verified in \cite{james2017transferring}.

However, the vector set of APs is not sufficient to construct the distribution of behavior prior ($b_{\theta}$), and we define additional adaptable parameters $\theta^{bp}$, then make Gaussian mixture model (GMM) from them as Fig. \ref{fig:AP_figure}. That is, we set predicted APs ($\theta^{ap}$) as the mean of GMM and make residual GMM parameters ($\theta^{bp}$: mixture weights, covariance matrix) adaptable. 
Then
$\theta^{bp}$ is updated with the M-step \eqref{eq:mstep_update} while $\theta^{ap}$ is frozen.
On the other hand, to improve the exploration while supporting the APs, 
we introduce additional regularization term $\mathcal{H}(b_{\theta})$ to the loss of M-step as
\begin{equation}
\label{eq:behavior_prior_update_objective}
    \mathcal{L}^M = \mathop{\E}_{s\sim\mathcal{B}}
    \left[
    D_{KL}(q^k || b_{\theta})+\nu\mathcal{H}(b_{\theta})
    \right]
\end{equation}
where $\nu$ is a scale factor.
Note we can auto-tune this $\nu$ with the dual formulation of hard constraint, while introducing a hyperparameter $\epsilon_{\nu}$.

\begin{figure}[!t]
	\centering
	\includegraphics[width=.99\columnwidth]{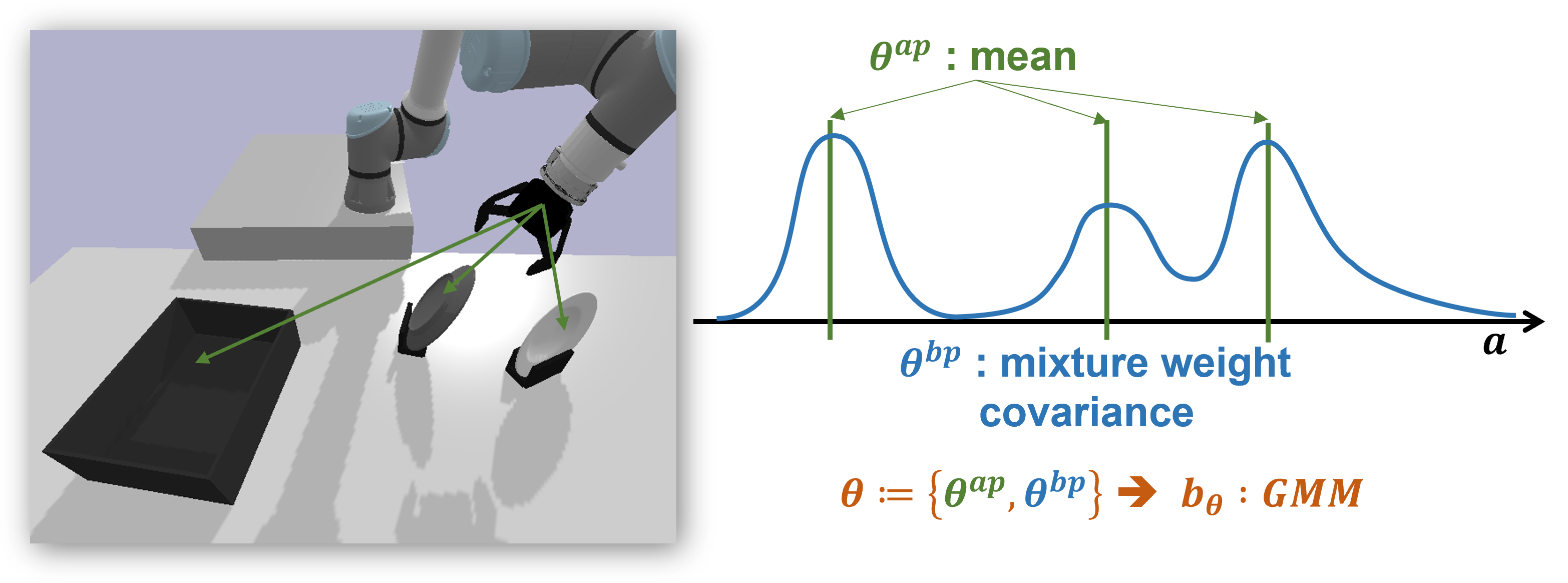}
	\caption{Illustration of APs and distribution of behavior prior. Left snapshot describes one moment between performing picking and placing dishes, and in this case, APs can be defined as vectors toward each dish and a goal. To construct $b_{\theta}$ as GMM, mixture weights and covariance matrix are added, and $\theta^{ap}$ is set as mean of GMM.} \vspace{-2mm}
	\label{fig:AP_figure}
\end{figure}

\subsection{Algorithmic Consideration}
\label{sec:algorithmic_consideration}
The important sampling described in Sec. \ref{sec:nonparametric_policy} can be computationally expensive, since it requires batch calculation.
So, we improve the computational efficiency of inference by carefully designing a network structure. 
The main bottle-neck of NPQL is to evaluate Q values from action samples. 
Naively, we can design network of approximating Q function as $Q(s,a)$, which takes raw observation and action as input. This model is not a problem if the observation is based on low-dimensional state, but if observation includes pixel, the neural network may contains CNN structure and it is not scalable because we should do multiple forward calculation for the number of action samples, for each usage of important sampling (i.e., action selection and Bellman operator). 

So, alternatively, we design the network by dividing it into two parts as Fig. \ref{fig:network_Structure}: feature extractor, output modules. The feature extractor $f^{fe}(s;\varphi)$ is to predict extracted feature $\phi$ from the raw observation $s$. Then, the output modules take $\phi$ as input and predict each desired output. Through this network structure, we can keep extracted feature $\phi$ instead of raw observations, and remove multiple calculation of $f^{fe}$ when evaluating action samples.
Specifically, NPQL has 5 output modules : 
$f^{bp}(\phi;\vartheta^{bp})$ to predict $\theta^{bp}$,
$f^{ap}(\phi;\vartheta^{ap})$ to predict $\theta^{ap}$,
$Z_{\eta}(\phi,a)$ to predict $Z$ as described in Sec. \ref{sec:Distributional Q with Action Samples},
$f^{\alpha}(\phi;\zeta)$ to predict $\alpha$, which is also function of state as \eqref{eq:alpha_dual}. The network structure and batch data flow are described in Fig. \ref{fig:network_Structure}.
Note that, any output module can be added to this structure, e.g. auxililary output to predict pose of the objects.

To combine E-step, M-step and behavior cloning of training APs prediction model, we should consider sequence of each step. Motivated by utilizing auxiliary loss to improve training efficiency \cite{zhu2018reinforcement}, we train behavior cloning and E-step simultaneously, because E-step is in charge of feature extractor training and behavior cloning loss can play a role of auxiliary loss.
Then, total E-step loss becomes,\vspace{-1.1mm}
\begin{equation}\vspace{-1.1mm}
    \label{eq:e_step_loss}
    \begin{split}
    \mathcal{L}^{E}(\varphi, \vartheta^{ap}, \eta) = \lambda^{ap}\mathcal{L}^{ap}(\varphi, \vartheta^{ap}) + \mathcal{L}^{Q}(\varphi, \eta)
    \end{split}
\end{equation}
where $\mathcal{L}^{ap}$ is behavior cloning loss with label of APs, $\mathcal{L}^{Q}$ is calculated with \eqref{eq:Z_loss}, and $\lambda^{ap}$ is scale coefficient for $\mathcal{L}^{ap}$.
In the M-step, we freeze $\varphi$ and train only $\vartheta^{bp}$ with the loss \eqref{eq:behavior_prior_update_objective}. Total algorithm is presented in algorithm \ref{algorithm:NPQL}.

\begin{algorithm}[!t]
\SetAlgoLined
 initialization network parameters and replay buffer $\mathcal{B}$\\
 \For{each iteration}{
 \For{each environment step}{
    $\mathcal{B} \leftarrow \mathcal{B} \cup BatchRollOut()$
 }
 \For{each gradient step}{
     $\phi = f^{fe}(s;\varphi)$ \\ 
     $\theta^{bp}=f^{bp}(\phi;\vartheta^{bp}), \ \theta^{ap}=f^{ap}(\phi;\vartheta^{ap})$ \\ 
     $\{a, \log{b_{\theta}(a)}\} \sim b_{\theta}$\\ 
     \textbf{E-step} with \eqref{eq:e_step_loss}: $\nabla\mathcal{L}^{E}\left(\varphi, \vartheta^{ap}, \eta \right)$ \\
     $\alpha$ update with \eqref{eq:alpha_dual} \\
     \textbf{M-step} with \eqref{eq:behavior_prior_update_objective}: $\nabla\mathcal{L}^{M}\left( \vartheta^{bp} \right)$ \\
     $\nu$ update\\
     target update
 }
 }
 \caption{AP-NPQL}
 \label{algorithm:NPQL}
\end{algorithm}

Because NPQL is the off-policy algorithm, transition data from any behavior policy can be used. Additionally, we can design planning of performing tasks utilizing accurate state data of objects, because we apply sim-to-real approach in this paper and training procedure is performed in the simulation.
So, we make an expert dataset from scripted policy, then keep it as a split replay buffer. That is, we maintain two replay buffer and sample from both buffer with fixed ratio motivated by \cite{paine2019making}.

\begin{figure*}[t]
\centering
\subfloat[pushing plate]{
\includegraphics[width=.25\textwidth]{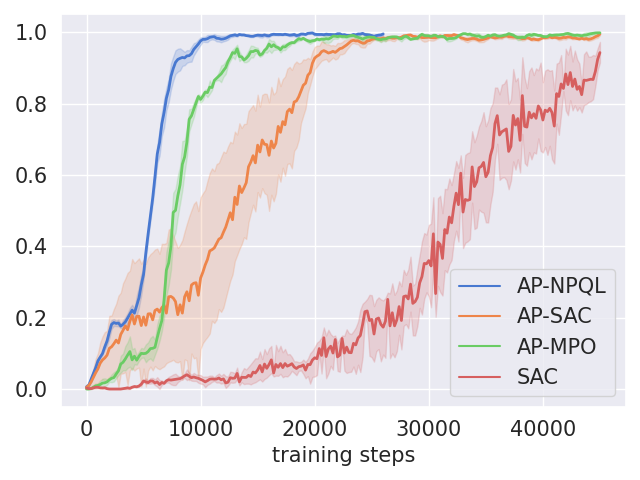}
}
\hspace{-5mm}
\subfloat[stacking box]{
\includegraphics[width=.25\textwidth]{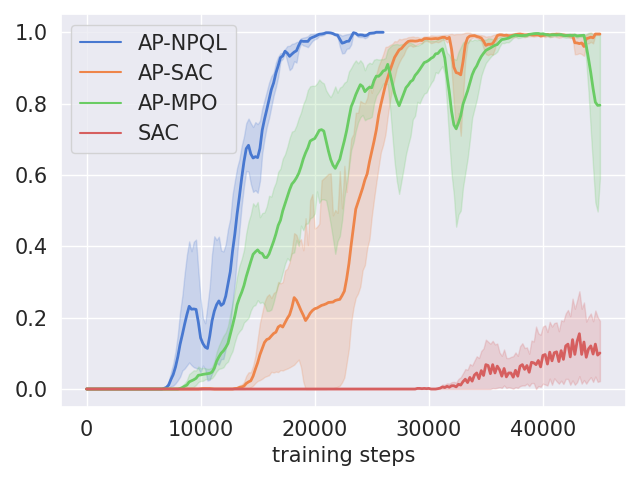}
}
\hspace{-5mm}
\subfloat[flipping cup]{
\includegraphics[width=.25\textwidth]{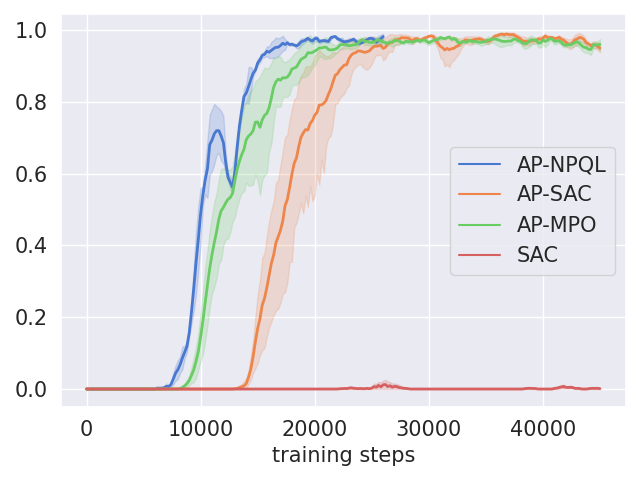}
}
\hspace{-5mm}
\subfloat[picking and placing plate]{
\includegraphics[width=.25\textwidth]{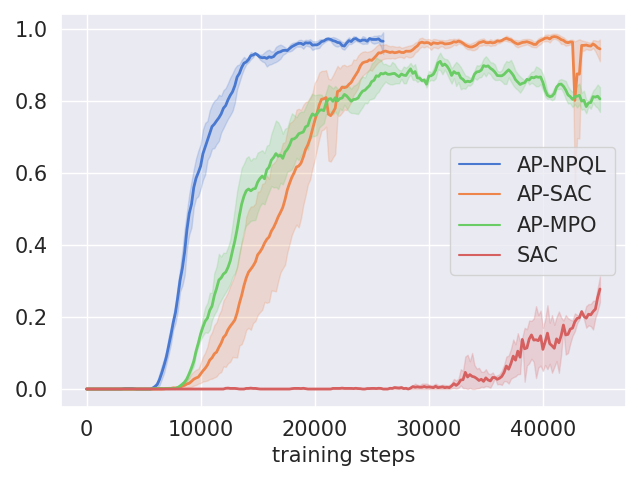}
}
\caption{The plots to show training progress. The horizontal axis is number of training steps, and the vertical axis is success rate.
All algorithms are trained with replay buffer from scripted policy, and the algorithms with prefix 'AP-' are trained with behavior prior based on APs. The plots are drawn from three different training of each algorithms, with each different random seed.
It clearly shows that the training efficiency is improved by introducing APs (AP-NPQL, AP-SAC, AP-MPO vs. SAC),
and the proposed AP-NPQL shows better final performance in picking and placing plate, due to the elimination of the parametric projection.
Note that the baseline SAC even fails for the flipping cup task.
}\vspace{-2mm}
\label{fig:comparative_results}
\end{figure*}

\section{Experiments}
\label{sec:Experiments}
AP-NPQL is specialized with object manipulation task, and the performance of it is evaluated with four object manipulation tasks. Through the comparison experiments with other algorithms, we show the effects of introducing non-parametric policy and APs. Details about the experiment setting and results are explained in this section.

\subsection{Environments}
\label{sec:environment}
Across all environments including real-world setup, we utilize UR5e 6DOF manipulator, equipped with a Robotis 2 finger parallel gripper (RH-P12-RN), Intel RealSense Camera D435 for wrist camera, and D455 for the global camera. The global camera views the whole scene and the wrist camera is attached to the last joint to see local details which can be occluded in the global camera.
Pybullet \cite{coumans2019} is used to construct simulation environments. 

For the validation and comparative experiment, four object manipulation tasks as in Fig. \ref{fig:envs}, are employed in this paper:
\begin{itemize}
    \item \textbf{Pushing plate}: The robot should push a plate on the table to the goal and get a reward of +1 if the plate is on the goal. 
    \item \textbf{Stacking box}: The robot should stack one box on another. The colors of the boxes are fixed with yellow and blue. The reward is granted only when boxes are stacked. 
    \item \textbf{Flipping cup}: The robot should flip a cup and move it to the goal. The shape of the cup is randomly selected within three shapes. The robot gets a reward of +1 if the cup is on the goal and flipped.
    \item \textbf{Picking and placing plate}: A plate is standing on a holder and the robot should pick the plate and move it to the goal box. The reward +1 is granted only when the plate is into the goal box. 
\end{itemize}

The action is desired velocity for each six joint and gripper movement of $\{open, close\}$ with the period of 0.15 seconds. The velocity of manipulator is limited to $[-1,1] m/s$, and the acceleration is limited to $[-1.2, 1.2] m/s^2$, and the gripper moves with a fixed velocity given discrete action. The observations include pixel streaming from two cameras, global of size 72$\times$96 and wrist of size 36$\times$64 as figure \ref{fig:network_Structure}.
For the robot states, we use the 6DOF arm joint angles and the 1DOF gripper angle. In addition, we stack the last three results for these robot states.
The trajectory is limited to 14 seconds. 
For all the above four object manipulation tasks, we define three APs, in joint configuration space $\in \mathbb{R}^6$, which correspond to a target object, the desired goal, and a fixed neutral configuration, respectively.
We can easily obtain these APs by calculating the difference between the desired configuration and the current configuration of the manipulator.
The object and the goal are spawned uniformly with a fixed range. 

\subsection{Comparison Experiments}
\label{sec:comparison_experiment}
We execute a comparison experiment to verify the benefits of introducing non-parametric policy and APs.
To verify them, we implement ourselves parametric version algorithms with behavior prior based on APs.
If we introduce parametric policy, Q-learning setting in Sec. \ref{sec:Q-learning with Non-Parametric Policy} is changed to policy iteration. For that, Bellman operator should be changed to policy evaluation as\vspace{-1mm}
\begin{equation*}\vspace{-1mm}
    \mathcal{T}^{\pi_{\psi_k}} Q\left( s,a \right) = r\left(s,a\right)+\gamma \mathop{\E}_{s'\sim p_{sa}}\left[\mathop{\E}_{a'\sim\pi_{\psi_k}}{\left[Q(s',a')\right]} \right],
\end{equation*}
and policy improvement step should be added to update the parameters of the policy with methods of MPO and SAC as explained in Sec. \ref{sec:parametric_policy}.
We call the algorithm implemented by each projection method AP-MPO and AP-SAC, respectively.

Let us explain detail set-up used in experiments. Regarding network design, for AP-NPQL and the critic network of AP-MPO and AP-SAC, we stack two layers with 256 width for $f^{bp}$, $f^{ap}$, $Z_{\eta}$, and $f^{\alpha}$, and use simple two layers convolution network with relu activation for encoding pixel observation from global and wrist camera. The policy network of AP-MPO, AP-SAC, and SAC have same structure except that they only predict actions.
Discount factor $\gamma$ is 0.99, E-step learning rate is 3e-4, $\alpha$ learning rate is 1e-4, M-step learning rate is 3e-4, $\nu$ learning rate is 1e-4, delay rate for target update $\lambda$ is 0.05, sampling size to calculate target Q-value is 120, sampling size to Boltzmann policy is 100, KL divergence limit $\epsilon$ is tuned by grid search from 0.5 to 4.0 with 0.5 interval, and $\epsilon_\nu$ is also tuned by grid search.
We utilize 10 parallel simulation with GPU of NVIDIA Tesla V100.

Total 4 algorithms (AP-NPQL, AP-MPO, AP-SAC, SAC) are applied to four object manipulation environments in simulation as described in \ref{sec:environment} with additional replay buffer of expert data as in \ref{sec:algorithmic_consideration}, and the results are shown in Fig. \ref{fig:comparative_results}. First, the effect of introducing APs is shown with a comparison between SAC and others. As seen in the plot of all environments, the SAC is an order magnitude slower than other methods and has lower performance. This shows that introducing APs can help to find a solution efficiently.

The effects of parametric projection are revealed in the comparison plot between AP-NPQL, AP-MPO, and AP-SAC. Three methods use the same prior distribution, but the final performance and the convergence speed are different. 
Across all four environments, the result of AP-MPO converges rapidly at the beginning, but the final performance is relatively poor, while the result of AP-SAC progress slowly, but the final performance is better than that of AP-MPO.
The reason for this is related to the properties of KL divergence.
The projection of MPO is to use forward, moment-matching KL divergence, then it has the strength to quickly adopt to non-parametric distribution, but in the final stage of the training, it is disturbed to match the local peek.
On the other hand, the projection in SAC is to use reverse, mode-seeking KL divergence, then it can delicately find the local peek distribution, but in the first stage, it is hard to find an effective region. So empirically, AP-SAC is more sensitive to hyper-parameters than AP-MPO.
Finally, AP-NPQL has no projection step, therefore it can not only quickly find but also delicately describe peek regions.

\subsection{Sim-to-real Transfer to Real Robot}
\label{sec:exp_sim_to_real}
As shown in Fig. \ref{fig:sim-to-real}, we transfer the result and measure the performance of the plate pick-and-place task to the real hardware environment, which is the most difficult task among above four tasks due to the various plate pose.
For this, we use simple domain randomization technique, for both visual and physical domains\cite{alghonaim2020benchmarking, tobin2018domain}.


We randomize visual streaming, camera pose, table height, object size, object shape, table orientation, and gripper speed within fixed range. For randomization of visual streaming, we change the color and texture of the robot and objects as in Fig. \ref{fig:sim-to-real}. The pixel size of the sim-to-real experiment is increased to 120$\times$160.
In both simulation and real world, we use the acceleration control mode, where the desired acceleration is calculated from the difference between desired and current velocity, and the direct torque inputs to each joint are calculated from inverse dynamics. We use this control mode to enhance interaction between the environment, including table and objects, and the manipulator.
In the experiment of real hardware, we use admittance control in which virtual simulation used in training is utilized, and it can reduce the physical sim-to-real gap.

\begin{figure}[t]
\centering
\subfloat{
\includegraphics[width=.96\columnwidth]{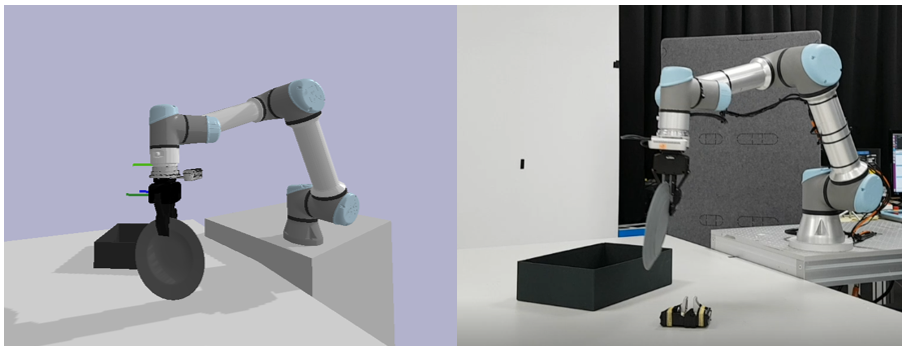}
}
\vspace{-0.2pt}
\subfloat{
\includegraphics[width=.953\columnwidth]{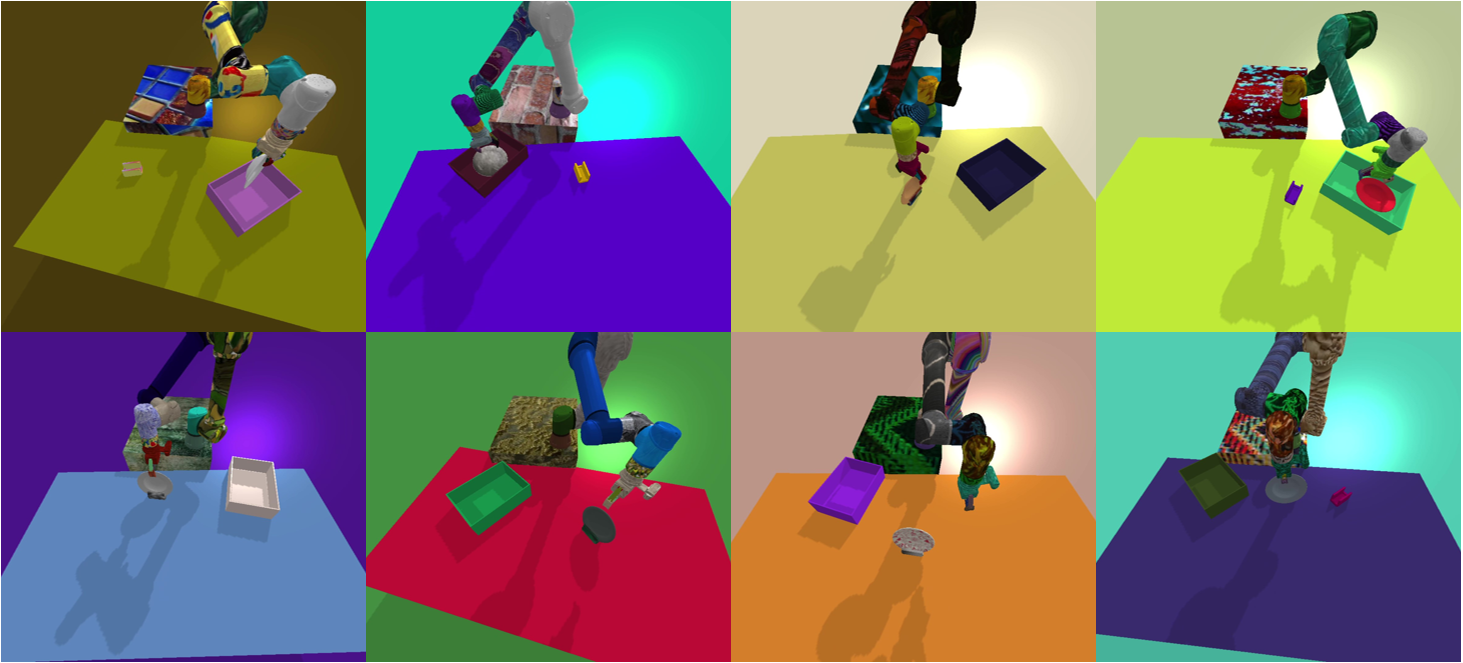}
}
\caption{Snapshots of performing plate pick-and-place task in the simulation (top left) and real world (top right), and variants of randomization (bottom). The policy can decide actions from raw sensory inputs
as closed-loop.
}\vspace{-3mm}
\label{fig:sim-to-real}
\end{figure}
 
We evaluate transfer performance by measuring success rate in the real setup as Fig. \ref{fig:sim-to-real}. The real environment is replicated with the simulation setup, including camera position, initial configuration, initial object and goal pose, the color of the table, and gripper speed. The transferred policy can succeed in tasks with a rate of 70\% over 20 attempts. The performance drops through transfer, and it results from various reasons. We found that the policy is sensitive to distractors such as line and background. We expect that the performance can be improved with adversarial approach \cite{rao2020rl}, utilizing depth camera \cite{xiang2020learning}, but we remain it as future works.
Qualitatively, the trained policy can chase plate even with human interaction, and perform the task with various colors of the plate, from raw sensory inputs as closed-loop.

\section{conclusion}
\label{sec:conclusion}
In this paper, we present an efficient RL formulation to solve the object manipulation problem with visual input and sparse reward. We first formulate the object manipulation as an inference problem with a prior and show that it can be solved by joint EM-style optimization. Then we propose the NPQL, which is based on regularized Q-learning algorithm, while dominant prior works are based on actor-critic, which cause a performance loss by additional parametric projection step. We can gain additional performance and efficiency benefits by introducing action primitives as a basis of the behavior prior. The combined framework AP-NPQL is evaluated in four object manipulation tasks, and one of them, plate pick-and-place, is transferred to the real robot in a sim-to-real manner.

An interesting future direction is to combine NPQL with various behavior prior models. As in \cite{fujimoto2019off, siegel2020keep}, we can train parametric behavior prior model from batched data $\mathcal{B}$ and it naturally extends to offline reinforcement learning. 
Another direction is to extend to multi-modal tasks with structured behavior prior such as \cite{zhuang2020lyrn}.

\bibliographystyle{IEEEtran}
\bibliography{IEEEabrv, ref}

\appendix
\subsection{Contraction of $\mathop{T}$}
\label{adx:contraction_T}
\begin{proof}
    Define $\lVert Q_1-Q_2 \rVert=\max_{a,s}\lvert Q_1(s,a)-Q_2(s,a) \rvert$, then
    for any $Q_1, Q_2$,
    \begin{align*}
        \lvert \mathop{\E}_{\pi_1^*}Q_1 - \mathop{\E}_{\pi_2^*}Q_2 \rvert &\le \max_{\pi \in (\pi_1^*, \pi_2^*)}\lvert \mathop{\E}_{\pi}Q_1 - \mathop{\E}_{\pi} Q_2 \rvert \\
        &\le \lVert Q_1 - Q_2 \rVert
    \end{align*}
    where $\pi_n^*=\argmax_{\pi}\mathop{\E}_{\pi}Q_n \ \text{s.t.} \ D_{KL}(\pi || b) \le \epsilon$. Then,
    \begin{align*}
        \lVert \mathcal{T}Q_1 -  \mathcal{T}Q_2 \rVert &= \gamma \lVert \mathop{\E}_{s'\sim p_{sa}}{\left[ \mathop{\E}_{\pi_1^*}Q_1 -  \mathop{\E}_{\pi_2^*}Q_2 \right]} \rVert \\
        &\le \gamma \lVert \mathop{\E}_{s'\sim p_{sa}}{\lVert Q_1 - Q_2 \rVert} \rVert \\
        &=\gamma \lVert Q_1 - Q_2 \rVert
    \end{align*}
\end{proof}

\end{document}